\newtheorem{theorem}{Theorem}[section] 
\journal{Neurocomputing}
\begin{document}
\begin{frontmatter}
\title{CMaP-SAM: Contraction Mapping Prior for SAM-driven Few-shot Segmentation}
\author[label1,label2]{Shuai Chen}
\author[label1]{Fanman Meng\corref{mycorrespondingauthor}}
\cortext[mycorrespondingauthor]{Corresponding author}
\ead{fmmeng@uestc.edu.cn}
\author[label2]{Liming Lei}
\author[label1]{Haoran Wei}
\author[label1]{Chenhao Wu}
\author[label1]{Qingbo Wu}
\author[label1]{Linfeng Xu}
\author[label1]{Hongliang Li}
\affiliation[label1]{organization={University of Electronic Science and Technology of China}, city={Chengdu}, postcode={611731}, country={China}}
\affiliation[label2]{organization={TaiHang Laboratory}, city={Chengdu}, postcode={610213}, country={China}}

\begin{abstract}
Few-shot segmentation (FSS) aims to segment new classes using few annotated images. While recent FSS methods have shown considerable improvements by leveraging Segment Anything Model (SAM), they face two critical limitations: insufficient utilization of structural correlations in query images, and significant information loss when converting continuous position priors to discrete point prompts. To address these challenges, we propose CMaP-SAM, a novel framework that introduces contraction mapping theory to optimize position priors for SAM-driven few-shot segmentation. CMaP-SAM consists of three key components: (1) a contraction mapping module that formulates position prior optimization as a Banach contraction mapping with convergence guarantees. This module iteratively refines position priors through pixel-wise structural similarity, generating a converged prior that preserves both semantic guidance from reference images and structural correlations in query images; (2) an adaptive distribution alignment module bridging continuous priors with SAM's binary mask prompt encoder; and (3) a foreground-background decoupled refinement architecture producing accurate final segmentation masks. Extensive experiments demonstrate CMaP-SAM's effectiveness, achieving state-of-the-art performance with 71.1 mIoU on PASCAL-$5^i$ and 56.1 on COCO-$20^i$ datasets. Code is available at \url{https://github.com/Chenfan0206/CMaP-SAM}.
\end{abstract}

\begin{keyword}
  Few-shot Segmentation \sep Segment Anything Model \sep Contraction Mapping 
\end{keyword}

\end{frontmatter}

\section{Introduction}
\label{intro}

Image segmentation is the foundational task in computer vision with broad applications in medical diagnosis, remote sensing, and industrial inspection. Despite significant advances in fully supervised segmentation methods~\cite{long2015fully,chen2017deeplab, cheng2021per,SARFormer}, their deployment in real-world scenarios faces two critical limitations: the prohibitive cost of acquiring pixel-level annotations, and the inability to generalize to novel object categories beyond the training set. These challenges are particularly acute in specialized domains where expert knowledge is required for annotation or where data samples are inherently limited, creating a critical bottleneck for practical applications.

Few-shot segmentation (FSS)~\cite{tian2020prior, shi2022dense,min2021hypercorrelation,boudiaf2021few,Wang_2023_CVPR,cheng2023hpa,10109193,dissanayake2025few,wang2025few,su2025few} addresses these limitations by enabling models to segment novel object categories using only a handful of annotated examples. The core idea involves leveraging limited labeled samples in a support set to guide the segmentation of query images. In many FSS methods, position prior~\cite{tian2020prior, shi2022dense} play an important role in directing segmentation. These priors, typically generated through feature matching or prototype learning between support and query images, provide probabilistic spatial representations that indicate potential target regions. By incorporating these position priors, models can more effectively filter out background distractions and focus on regions likely to contain the target object.

\begin{figure}[htbp]
  \centering
  \includegraphics[width=1. \linewidth]{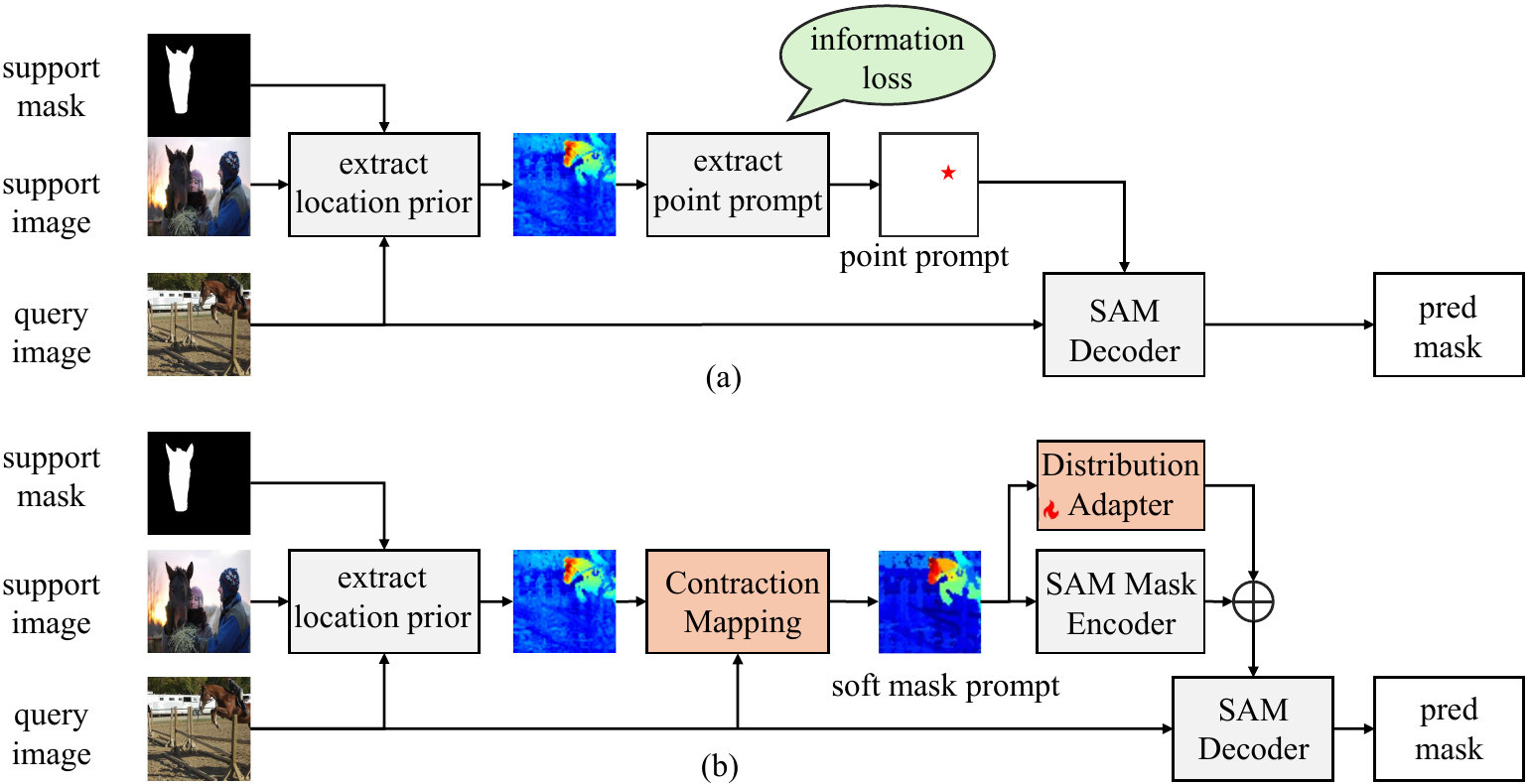}
  \caption{Comparative analysis: (a) Conventional approaches employ ``generate location prior, then extract point prompt'' pipeline, causing information loss. (b) The proposed CMaP-SAM optimizes position priors through contraction mapping theory, preserving both semantic guidance from support images and structural correlations within query images. Additionally, a distribution adapter is implemented to bridges continuous probability distributions with SAM's binary mask prompt encoder, thereby eliminating information loss.}
  \label{fig:motivation}
\end{figure}

The recent introduction of the Segment Anything Model (SAM)~\cite{kirillov2023segment, ravi2024sam} represents a paradigm shift in segmentation capabilities. Through pretraining on massive datasets, SAM demonstrates remarkable zero-shot generalization with its prompt-driven architecture. The integration of SAM and FSS creates a symbiotic relationship with bidirectional benefits. SAM typically requires manual prompts for each test image, an often time-consuming process, but FSS methods can automatically generate visual references as prompts, significantly enhancing SAM's practicality. Conversely, SAM's powerful segmentation capabilities offer substantial performance improvements for FSS on novel categories.

However, effectively integrating SAM with FSS frameworks presents significant technical challenges. First, while FSS methods typically generate continuous probability distributions as position priors, SAM expects discrete prompts (points, boxes, or binary masks) as input. As shown in Fig.~\ref{fig:motivation} (a), current approaches~\cite{feng2024boosting} attempt to bridge this gap employ a sequential ``generate coarse mask, extract discrete points'' strategy, which introduces substantial information loss during the quantization process. The rich uncertainty information encoded in probability distributions, essential for characterizing ambiguous regions, is progressively lost when continuous distributions are compressed into binary masks and subsequently reduced to sparse point sets. Second, existing position prior generation methods in FSS primarily rely on global semantic matching between support and query images, overlooking the intrinsic structural correlations within query images themselves. This limitation becomes particularly problematic when segmenting objects with complex internal structures or in the presence of visually similar distractors, as the initial position priors lack sufficient boundary precision and structural coherence.

To address these challenges, this paper introduces CMaP-SAM, a novel framework that leverages contraction mapping theory~\cite{istratescu2001fixed} to optimize position priors for SAM-driven few-shot segmentation. As shown in Fig.~\ref{fig:motivation} (b),  The proposed CMaP-SAM is built on three key innovations: (1) A contraction mapping module that formulates position prior optimization with mathematical convergence guarantees. This theoretical framework enables iterative refinement of position priors by constructing pixel-wise structural similarity measures that capture intrinsic correlations within query images. (2) An adaptive distribution alignment module that bridges the representation gap between continuous probability priors and SAM's binary mask prompt encoder, preserving the rich information in probability distributions while enabling effective utilization of SAM's segmentation capabilities. (3) A foreground-background decoupled refinement architecture that produces accurate final segmentation masks by effectively handling the inherent asymmetry between foreground objects and background regions, particularly beneficial for objects with complex boundaries. The contribution of this work can be summarized as follows:
\begin{itemize}
  \item CMaP-SAM is proposed to integrate contraction mapping theory with SAM for few-shot segmentation. Position prior optimization is formulated as a Banach contraction mapping with convergence guarantees, enabling iterative refinement of position priors.
  \item An adaptive distribution alignment module is introduced to bridge the representation gap between continuous probability priors and SAM's binary mask prompt encoder.
  \item A foreground-background decoupled refinement architecture is designed to produce accurate segmentation masks by addressing the inherent asymmetry between foreground objects and background regions.
  \item Extensive experiments validate CMaP-SAM's effectiveness, achieving state-of-the-art performance with 71.1 mIoU on PASCAL-$5^i$ and 56.1 mIoU on COCO-$20^i$ datasets.
  \end{itemize}

\section{Related Work}
\subsection{Few-Shot Segmentation}
Existing few-shot segmentation methods can be primarily categorized into prototype-based and matching-based approaches. Prototype-based methods extracted representative features from support images to guide query image segmentation, with early investigations~\cite{rakelly2018conditional,zhang2020sg} employing masked average pooling for singular global prototypes, subsequent research~\cite{yang2020prototype,lang2024few,gairola2020simpropnet,xie2021few,yang2021mining} developing multi-prototype approaches to decompose foreground information, and recent advancements~\cite{fan2022self,mao2022learning,cheng2023hpa} focusing on dynamic prototype generation mechanisms that adapted to cross-image appearance variations. In contrast, matching-based methods established pixel-level correspondences between support and query images, evolving from initial approaches~\cite{min2021hypercorrelation} that computed direct feature similarities to incorporating graph-based mechanisms~\cite{zhang2019pyramid,wang2020few} for structured relationship modeling, with contemporary methods extensively leveraging attention mechanisms—such as~\cite{zhang2021few,shi2022dense} that employed cross-attention for similarity computation and~\cite{peng2023hierarchical} that adopted hierarchical matching strategies for enhanced feature correspondence reliability. It was noteworthy that while prototype-based methods offered computational efficiency but sacrificed spatial details, matching-based approaches preserved structural information at the expense of higher computational complexity.

\subsection{SAM-based Segmentation}

The Segment Anything Model (SAM) \cite{kirillov2023segment} revolutionized zero-shot segmentation through its category-agnostic approach. Subsequent research advanced SAM's capabilities along three strategy: semantic enhancement, domain specialization, and few-shot integration. Semantic enhancement addressed SAM's limited categorical understanding through models like Semantic-SAM~\cite{semanticsam}, which improved granularity via joint training, and OV-SAM~\cite{OVSAM}, which incorporated CLIP-based \cite{radford2021learning} text prompting. Domain specialization tailored SAM to specific fields: RSPrompt~\cite{chen2024rsprompter} improved segmentation quality in remote sensing, MAS-SAM~\cite{yan2024mas} was designed for marine animal segmentation, while SAM-Med2D~\cite{cheng2023sam}, SurgicalSAM~\cite{yue2024surgicalsam} optimized performance for various medical imaging applications through specialized prompt encoders. In addition, Hi-SAM~\cite{ye2024hi} leveraged parameter-efficient fine-tuning for hierarchical text segmentation, and SAM-HQ~\cite{ke2023segment} introduced a learnable high-quality output token to improve performance on high-resolution images. Most recently, few-shot integration transformed SAM from requiring labor-intensive manual prompts to achieving automated segmentation with minimal examples. PerSAM~\cite{zhang2023personalize} enabled single-sample model personalization, Matcher~\cite{liu2023matcher} developed robust bidirectional prompt sampling, and FSS-SAM \cite{feng2024boosting} combined few-shot predictions with SAM processing.

\section{Methodology}

\subsection{Problem Setup}
\label{subsection:problem_setup}
Few-shot segmentation represents a compelling research direction aimed at transferring learned segmentation capabilities to novel object categories with extremely limited supervision. In this learning paradigm, models must acquire generalizable representation knowledge from abundant base-class data ($\mathcal{D}_b$ with categories $\boldsymbol{C}_b$) and subsequently adapt to segment previously unseen novel classes ($\mathcal{D}_n$ with categories $\boldsymbol{C}_n$), where the fundamental constraint $\boldsymbol{C}_b \cap \boldsymbol{C}_n = \emptyset$ maintains evaluation integrity through strict category separation. The prevailing meta-learning approach structures the learning process into episodic tasks $\mathcal{T}$, each comprising a support-query pair $(\mathcal{S}, \mathcal{Q})$. Within the established $N$-way $K$-shot framework, each support set $\mathcal{S}$ contains precisely $N \times K$ labeled image-mask pairs $(I_s, M_s)$, providing the reference examples from which the model must extract discriminative features. Concurrently, the query set $\mathcal{Q}$ consists of $N$ test images $(I_q, M_q)$ awaiting accurate segmentation. The hyperparameters $N$ and $K$ control task complexity and supervision density respectively, with $K$ commonly set to minimal values (1 or 5) to reflect real-world annotation constraints. Throughout this formulation, both support and query images maintain consistent dimensionality ($I_s, I_q \in \mathbb{R}^{H \times W \times 3}$), as do their corresponding segmentation masks ($M_s, M_q \in \mathbb{R}^{H \times W}$).

\subsection{Overview of the Proposed Method: CMaP-SAM}
\begin{figure}[htbp]
    \centering
    \includegraphics[width= 1.0\linewidth]{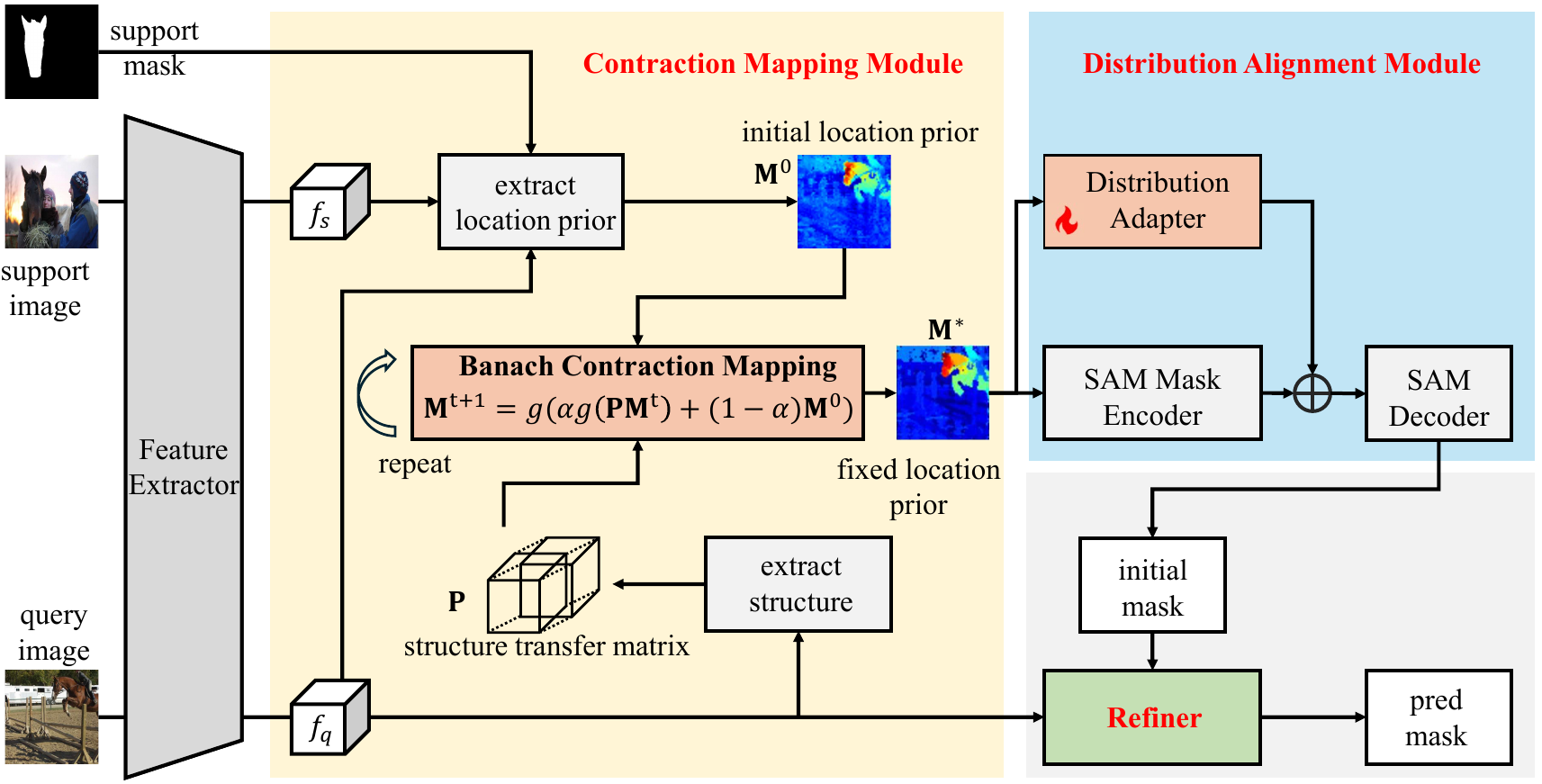}
    \caption{Overview of CMaP-SAM. It consists of three main components: (1) Contraction Mapping Prior Module for position prior construction and optimization; (2) Distribution Alignment Module for bridging continuous priors with SAM's binary mask prompt encoder; and (3) Foreground-Background Decoupled Refinement Module for final segmentation refinement.}
    \label{fig:framework} 
  \end{figure}

As illustrated in Fig.~\ref{fig:framework}, the proposed CMaP-SAM framework comprises three components that work in synergy to enhance few-shot segmentation performance. The first component, the Contraction Mapping Prior Module, establishes and optimizes position priors by leveraging prototype matching between support and query features, followed by an iterative refinement process grounded in contraction mapping theory. This approach effectively captures the intrinsic structural correlations within query images while maintaining valuable semantic guidance from the support set. The second component, the Distribution Alignment Module, addresses a critical challenge by intelligently bridging the gap between the continuous probability distributions generated as priors and SAM's requirement for binary mask inputs. This module preserves the rich uncertainty information embedded within the probability distributions while enabling the effective utilization of SAM's powerful segmentation capabilities. Finally, the Foreground-Background Decoupled Refinement Module enhances segmentation quality through  multi-scale architecture that separately processes foreground and background regions.

\subsection{Contraction Mapping Prior Module}
\label{subsection:CMaP_module}

\subsubsection{Initial Prior Construction}

The position prior~\cite{tian2020prior, shi2022dense} serves as a critical spatial guide in few-shot segmentation, directing attention to potential target regions. Given support images $I_s$ and query image $I_q$, high-dimensional features are first extracted using a pre-trained backbone network $\Phi(\cdot)$:
\begin{equation}
\mathbf{f}_s = \Phi(I_s), \quad \mathbf{f}_q = \Phi(I_q),
\end{equation}
where $\mathbf{f}_s \in \mathbb{R}^{B\times C\times H_s\times W_s}$ and $\mathbf{f}_q \in \mathbb{R}^{B\times C\times H_q\times W_q}$ represent the feature maps of support and query images, respectively. To accurately represent the target class, a class prototype is constructed through mask-guided feature aggregation:
\begin{equation}
\mathbf{proto}_{s} = \frac{\sum(\mathbf{f}_s \odot \mathbf{M}_s)}{\sum\mathbf{M}_s},
\end{equation}
where $\mathbf{proto}{s} \in \mathbb{R}^{B\times C\times 1\times 1}$ represents the class prototype, $\mathbf{M}_s \in \mathbb{R}^{B\times 1\times H_s\times W_s}$ is the support mask, and $\odot$ denotes element-wise multiplication. This mask-guided aggregation effectively isolates target features from background interference. The initial position prior $\mathbf{M}^{0} \in \mathbb{R}^{B\times 1\times H_q\times W_q}$ is computed using cosine similarity between query features and the class prototype:
\begin{equation}
\mathbf{M}^{0}_{raw} = \text{sim}(\mathbf{f}_q, \mathbf{proto}_{s}) = \frac{\langle\mathbf{f}_q, \mathbf{proto}_{s}\rangle}{|\mathbf{f}_q|\cdot|\mathbf{proto}_{s}|},
\end{equation}
where $\text{sim}(\cdot,\cdot)$ denotes cosine similarity, $\langle\cdot,\cdot\rangle$ represents inner product, and $|\cdot|$ represents Euclidean norm. To enhance the discriminative power of the position prior, min-max normalization is applied to map the position prior to the $[0,1]$ range and amplifying the contrast between foreground and background regions:
\begin{equation}
\mathbf{M}^{0} = \frac{\mathbf{M}^{0}_{raw} - \min(\mathbf{M}^{0}_{raw})}{\max(\mathbf{M}^{0}_{raw}) - \min(\mathbf{M}^{0}_{raw})}.
\end{equation}

\subsubsection{Structure Transfer Matrix Construction} Given feature representation $\mathbf{f}_q \in \mathbb{R}^{B\times C\times H_q\times W_q}$ of the query image, its spatial dimensions are first reshaped to sequence form $\mathbf{f}_q \in \mathbb{R}^{B\times C\times N_q}$ where $N_q = H_q \times W_q$, enabling pixel-level correlation modeling. A structure transfer matrix $\mathbf{P} \in \mathbb{R}^{B\times N_q\times N_q}$ is constructed through:

\begin{equation}
\mathbf{P} = \underbrace{\text{softmax}}_{\text{ Normalization}} \left( \underbrace{\mathcal{T}_k}_{\text{Sparsification}} \left( \underbrace{\phi(\mathbf{f}_q) \mathbin{/} t}_{\text{Similarity calculation}} \right) \right), 
\end{equation}
where $\phi(\mathbf{f}_q)$ represents feature similarity measure, implemented as dot product $\mathbf{f}_q^\top \mathbf{f}_q$. This construction involves three key operations: adjusting the raw feature similarity matrix through temperature coefficient $t$; applying sparsification operator $\mathcal{T}_k(\cdot)$ to achieve local selective attention; and applying softmax normalization to obtain a probabilistic transfer matrix. The sparsification operator $\mathcal{T}_k(\cdot)$ is designed as a Top-$k$ selection function:
\begin{equation}
  \mathcal{T}_k(P_{i,j}) =
  \begin{cases}
  P_{i,j}, & \text{if } P_{i,j} \in \text{top-}k(P_i) \\
  -\infty, & \text{otherwise}
  \end{cases}
  \end{equation}
where $P_{i,j}$ represents elements in the similarity matrix, and $\text{top-}k(P_{i})$ denotes the set of $k$ elements with highest similarity values in row $i$. By setting non-Top-$k$ elements to negative infinity, their probability weights after softmax normalization approach zero, achieving strict sparsification. This design reduces computational complexity from $O(N_q^2)$ to $O(kN_q)$, significantly alleviating the computational burden when processing high-resolution feature maps.

\subsubsection{Contraction Mapping Optimization}
The core innovation of the proposed approach is formulating position prior optimization as a Banach contraction mapping~\cite{istratescu2001fixed} with convergence guarantees. This theoretical framework enables iterative refinement of position priors that preserves both semantic guidance from support images and structural correlations in query images. As shown in Fig.~\ref{fig:contraction_mapping}, this iteration comprises three key components: a structural consistency propagation term $g(\mathbf{P} \mathbf{M}^{t})$, an initial semantic anchoring term $(1-\alpha) \mathbf{M}^{0}$, and a piecewise affine normalization function $g(\cdot)$.

\begin{figure}[htbp]
  \centering
  \includegraphics[width=0.9\linewidth]{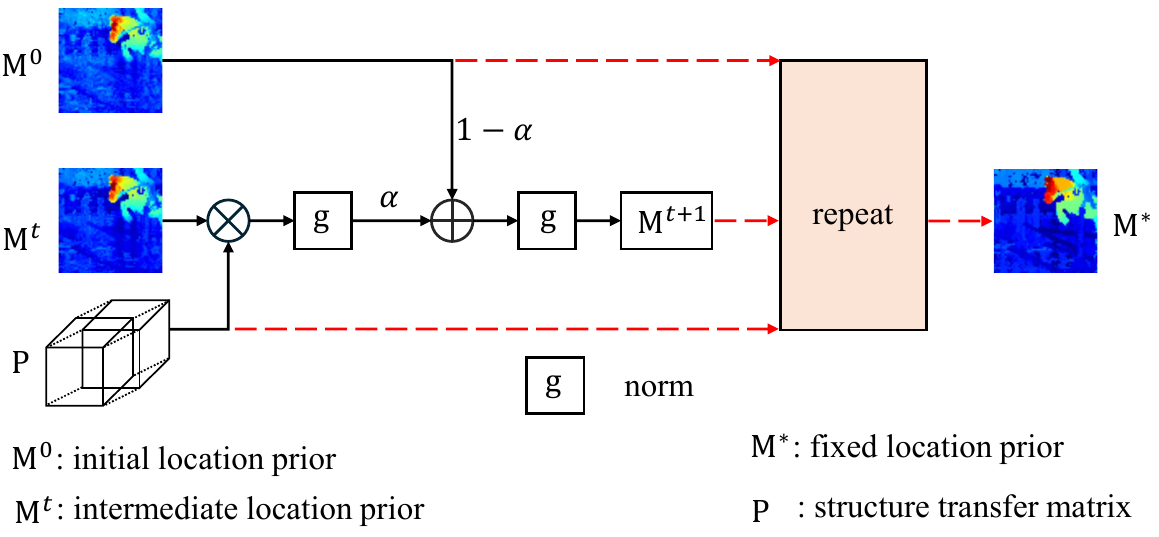}
  \caption{Iterative mapping process for position prior optimization. The process involves three key components: structural consistency propagation, initial semantic anchoring, and piecewise affine normalization.}
  \label{fig:contraction_mapping}
\end{figure}

\begin{equation}
\mathbf{M}^{t+1} = g\Big( \alpha g(\mathbf{P} \mathbf{M}^{t}) + (1-\alpha) \mathbf{M}^{0}\Big).
\label{eq:iterative_mapping}
\end{equation}

The structural consistency propagation term diffuses information based on local structural similarities in the query image, capturing geometric properties and spatial continuity of the target. The initial semantic anchoring term preserves semantic guidance from support images, preventing the iteration process from deviating into local optima based solely on image structures. The normalization function maintains the position prior values within the $[0,1]$ range:

\begin{equation}
g(\mathbf{v}) = \frac{\mathbf{v} - \min(\mathbf{v})}{\max\left(\max(\mathbf{v}) - \min(\mathbf{v}), \delta\right) + \varepsilon},
\label{eq:normalization_new}
\end{equation}
where $\delta > 0$ is a dynamic range lower bound and $\varepsilon = 10^{-8}$ ensures numerical stability. Unlike standard min-max normalization, this function introduces a dynamic range constraint, preventing excessive noise amplification while preserving distribution morphology. The balancing weight $\alpha \in (0,1]$ controls the trade-off between structural information and semantic guidance. During iteration, the position prior gradually converges to a fixed prior $\mathbf{M}^*$:

\begin{equation}
\mathbf{M}^* = g\Big(\alpha g(\mathbf{P} \mathbf{M}^*) + (1-\alpha) \mathbf{M}^{0}\Big).
\end{equation}

\textbf{Convergence Analysis:} Theoretical guarantees for the convergence of the iterative optimization process are provided:

\begin{theorem}
Given the iterative mapping defined in Equation~\ref{eq:iterative_mapping} with $\alpha \in (0,1]$ and $\delta > 0$ such that $\frac{\alpha}{(\delta + \varepsilon)^2} < 1$, the sequence ${\mathbf{M}^t}$ converges to a fixed prior $\mathbf{M}^*$ in the complete space $\bigl([0,1]^{B \times N_q}, |\cdot|_\infty\bigr)$.
\end{theorem}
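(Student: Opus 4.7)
The plan is to realize the iteration as a Banach fixed-point problem for the operator $T: [0,1]^{B\times N_q} \to [0,1]^{B\times N_q}$ defined by $T(\mathbf{M}) = g\bigl(\alpha\, g(\mathbf{P}\mathbf{M}) + (1-\alpha)\mathbf{M}^{0}\bigr)$, so that the fixed-prior equation $\mathbf{M}^{*} = g\bigl(\alpha g(\mathbf{P}\mathbf{M}^{*}) + (1-\alpha)\mathbf{M}^{0}\bigr)$ is exactly $T(\mathbf{M}^{*}) = \mathbf{M}^{*}$. First I would verify that $T$ is a self-map of the cube: by construction $g$ outputs values in $[0,1]$, and the convex combination inside the outer $g$ preserves this range, so every iterate remains admissible. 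The metric space $\bigl([0,1]^{B\times N_q}, \|\cdot\|_\infty\bigr)$ is a closed bounded subset of a finite-dimensional Banach space and therefore complete, so the hypotheses of Banach's theorem reduce to showing that $T$ is a strict contraction.

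The contraction estimate itself splits into two building blocks. Because $\mathbf{P}$ is the output of a row-wise softmax, it is row-stochastic, so $\|\mathbf{P}\mathbf{v} - \mathbf{P}\mathbf{w}\|_\infty \le \|\mathbf{v}-\mathbf{w}\|_\infty$; this makes $\mathbf{P}$ non-expansive in the sup-norm and lets it drop out of the final bound. For the normalization $g$, the dynamic-range floor $\delta$ is essential: the denominator $\max(\max(\mathbf{v})-\min(\mathbf{v}),\delta)+\varepsilon$ is uniformly bounded below by $\delta+\varepsilon$, so $g$ is shown to be Lipschitz with constant at most $1/(\delta+\varepsilon)$ in the sup-norm. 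Applying $g$ twice (once to $\mathbf{P}\mathbf{M}$, once to the aggregated map), and using that the anchoring term $(1-\alpha)\mathbf{M}^{0}$ is constant in $\mathbf{M}$ and hence contributes nothing to the Lipschitz constant, one obtains
\begin{equation*}
\|T(\mathbf{M}_1) - T(\mathbf{M}_2)\|_\infty \;\le\; \frac{1}{\delta+\varepsilon}\cdot \alpha \cdot \frac{1}{\delta+\varepsilon}\cdot \|\mathbf{M}_1 - \mathbf{M}_2\|_\infty \;=\; \frac{\alpha}{(\delta+\varepsilon)^2}\,\|\mathbf{M}_1-\mathbf{M}_2\|_\infty.
\end{equation*}
Under the stated hypothesis $\alpha/(\delta+\varepsilon)^2 < 1$, $T$ is a strict contraction, so Banach's theorem delivers a unique fixed prior $\mathbf{M}^{*}$ and geometric convergence $\mathbf{M}^{t} \to \mathbf{M}^{*}$ from any starting point, in particular from the initial prior $\mathbf{M}^{0}$.

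The main obstacle I expect is the Lipschitz estimate for $g$ itself. Both the numerator $\mathbf{v}-\min(\mathbf{v})\mathbf{1}$ and the denominator depend on $\mathbf{v}$ through the non-smooth operations $\min$ and $\max$, so a naive quotient-rule bound tends to introduce an extra factor of $2$ (since each of $\min$ and $\max$ is $1$-Lipschitz in sup-norm and their contributions add) together with a cross term from perturbing the denominator. To recover the clean constant $1/(\delta+\varepsilon)$ and match the claimed contraction rate, one would exploit the range bound $g(\mathbf{v})\in[0,1]$ to absorb the denominator-perturbation term, and treat the piecewise structure of the denominator separately: on the region $\max(\mathbf{v})-\min(\mathbf{v})<\delta$ the denominator is a pure constant $\delta+\varepsilon$ and the bound is immediate, while on the complement a slightly more delicate estimate separates numerator and denominator contributions. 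Once this Lipschitz bound is made rigorous, the remaining argument is a direct appeal to the Banach fixed-point theorem on the complete space $\bigl([0,1]^{B\times N_q}, \|\cdot\|_\infty\bigr)$.
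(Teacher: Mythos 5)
Your proposal follows the paper's own route essentially step for step: row-stochasticity of $\mathbf{P}$ as non-expansiveness in sup-norm, a Lipschitz bound for $g$ with constant $1/(\delta+\varepsilon)$ coming from the denominator floor, composition to the contraction factor $L = \alpha/(\delta+\varepsilon)^2$, and the Banach fixed-point theorem on $\bigl([0,1]^{B\times N_q}, \|\cdot\|_\infty\bigr)$; to your credit, you also make explicit the self-map and completeness checks that the paper leaves implicit, and you put your finger on precisely the step the paper asserts without justification. But the obstacle you flagged is not merely delicate --- the constant $1/(\delta+\varepsilon)$ is false, including in the regime you describe as ``immediate'' where both ranges lie below $\delta$ and the denominator is the constant $\delta+\varepsilon$. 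Take $B=1$, $N_q=2$, $\mathbf{v}_1 = (0,\,2\eta)$ and $\mathbf{v}_2 = (\eta,\,\eta)$ with $0 < 2\eta < \delta$: then $\|\mathbf{v}_1 - \mathbf{v}_2\|_\infty = \eta$, both denominators equal $\delta+\varepsilon$, yet $g(\mathbf{v}_1) - g(\mathbf{v}_2) = \bigl(0,\, 2\eta/(\delta+\varepsilon)\bigr)$, so the ratio is exactly $2/(\delta+\varepsilon)$. The culprit is the subtraction of $\min(\mathbf{v})$ in the numerator: a perturbation that moves the minimizing coordinate and a test coordinate in opposite directions doubles, and no appeal to $g(\mathbf{v})\in[0,1]$ can absorb this since the denominator is already constant there. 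Writing the coordinatewise difference as $\frac{1}{D_1}\bigl[(v_{1,i}-v_{2,i}) - (m_1-m_2) + \theta\bigl((m_1-m_2)-(\max_1-\max_2)\bigr)\bigr]$ with $\theta = (v_{2,i}-m_2)/D_2 \in [0,1]$ and optimizing over perturbations shows $2/(\delta+\varepsilon)$ is in fact the \emph{tight} constant: the min-shift enters numerator and denominator with opposite signs, which is why the contributions cannot stack beyond $2$, but also why they cannot be reduced to $1$.

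The consequence is quantitative, not cosmetic: the corrected composition gives $L \le 4\alpha/(\delta+\varepsilon)^2$, and for the paper's parameters $\alpha = 0.03$, $\delta = 0.2$ this equals $3 > 1$, so the theorem's stated hypothesis $\alpha/(\delta+\varepsilon)^2 < 1$ is insufficient both for your argument and for the paper's own proof, which silently asserts the constant-$1$ lemma you were rightly suspicious of. The argument can be salvaged by sharpening the \emph{outer} application of $g$ rather than the inner one: because $\mathbf{M}^0$ is min--max normalized, $\mathrm{range}(\mathbf{M}^0) = 1$ (barring degeneracy), and since $\mathrm{range}(u+w) \ge \mathrm{range}(u) - \mathrm{range}(w)$, the argument of the outer $g$ has dynamic range at least $(1-\alpha)\cdot 1 - \alpha = 1-2\alpha$, so the outer Lipschitz factor improves to $2/(1-2\alpha+\varepsilon)$ and $L \le \frac{4\alpha}{(1-2\alpha+\varepsilon)(\delta+\varepsilon)} \approx 0.64 < 1$ for the stated parameters. (The paper's own step claiming $\Delta_b \ge (1-\alpha)\delta$ gestures at such a range bound, but misstates it and then never uses it, since its final constant reverts to $(\delta+\varepsilon)^2$.) So your skeleton survives, but to close the proof you must either adopt a corrected hypothesis of roughly the form $4\alpha < (1-2\alpha+\varepsilon)(\delta+\varepsilon)$, or repair the per-step Lipschitz constants as above; the clean constant $1/(\delta+\varepsilon)$ you hoped to recover does not exist.
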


\begin{proof}
The theorem is established through the following steps:

\textbf{Row-stochastic property}: The transfer matrix $\mathbf{P}$ satisfies row-stochasticity $\sum_j \mathbf{P}{b,i,j} = 1$, ensuring for any $\mathbf{v} \in \mathbb{R}^{B \times N_q}$:
\begin{equation}
|\mathbf{P}\mathbf{v}|\infty \leq |\mathbf{v}|_\infty.
\end{equation}

\textbf{Lipschitz continuity of normalization}: Let $\Delta_b = \max\bigl(\max(\mathbf{v}_b) - \min(\mathbf{v}_b), \delta\bigr)$, then the normalization mapping satisfies:
\begin{equation}
|g(\mathbf{v}_1) - g(\mathbf{v}2)|\infty \leq \frac{1}{\delta + \varepsilon} |\mathbf{v}_1 - \mathbf{v}2|\infty.
\end{equation}

\textbf{Dynamic range lower bound}: During iteration, the dynamic range of the combination term $\alpha, g(\mathbf{P}\mathbf{M}^t) + (1-\alpha)\mathbf{M}^0$ is lower bounded by:
\begin{equation}
\Delta_b \geq (1-\alpha)\delta,
\end{equation}
since $\mathbf{M}^0 \in [0,1]$ and $g(\mathbf{P}\mathbf{M}^t) \in [0,1]$.

\textbf{Contraction property of iteration mapping}: Defining the complete iteration mapping as $f(\mathbf{M}) = g\Bigl( \alpha, g(\mathbf{P}\mathbf{M}) + (1-\alpha)\mathbf{M}^0 \Bigr)$, its Lipschitz constant is:
\begin{equation}
L = \frac{\alpha}{(\delta + \varepsilon)^2},
\end{equation}
with parameter settings $\delta = 0.2$ and $\alpha = 0.03$, $L < 1$ is satisfied, meeting the contraction mapping condition.

\textbf{Convergence conclusion}: According to the Banach fixed-point theorem~\cite{istratescu2001fixed}, the iteration sequence ${\mathbf{M}^t}$ converges to a fixed prior $\mathbf{M}^*$ in the complete space $\bigl([0,1]^{B \times N_q}, |\cdot|_\infty\bigr)$.
\end{proof}

\subsection{Mask Alignment Module}

While the contraction mapping prior module generates high-quality continuous position priors, SAM's mask prompt encoder expects binary mask inputs. This representation gap poses a significant challenge for effective integration. Existing methods~\cite{feng2024boosting} typically adopt a ``generate coarse mask, extract discrete points'' strategy, which results in substantial information loss during quantization.

\begin{figure}[htbp]
    \centering
    \includegraphics[width= 0.7\linewidth]{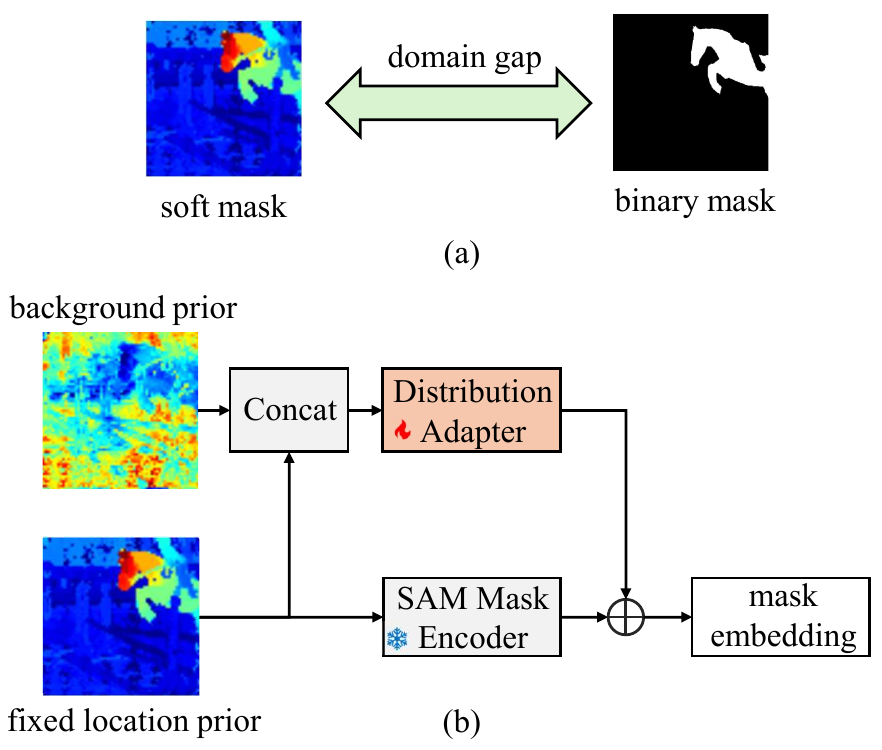}
    \caption{Mask alignment module. The dual-prior mask alignment module bridges the representation gap between continuous position priors and SAM's binary mask requirements, preserving rich information in probability distributions.}
    \label{fig:mask_alignment}
\end{figure}

To address this challenge, a dual-prior mask alignment module is proposed that bridges the representation gap while preserving the rich information in probability distributions. As illustrated in Fig.~\ref{fig:mask_alignment}, the foreground prior $\mathbf{M}^{*} \in [0,1]^{H \times W}$ and background prior $\mathbf{M}_{\text{bg}} \in [0,1]^{H \times W}$ are first concatenated to construct an enhanced dual-prior representation:

\begin{equation}
\mathbf{M}_{\text{dual}} = \text{Concat}[\mathbf{M}^{*}, \mathbf{M}_{\text{bg}}] \in \mathbb{R}^{2 \times H \times W},
\end{equation}
this design fully considers the complementary relationship between foreground and background information. Next, a lightweight adapter network is introduced to process the enhanced dual-prior representation:

\begin{equation}
\text{emb}_{\text{adapt}} = \text{adapter}(\mathbf{M}_{\text{dual}}),
\end{equation}
where the adapter network consists of a $3 \times 3$ convolution layer for local feature extraction, a ReLU layer, and a $1 \times 1$ convolution layer for feature dimension matching. To achieve dynamic fusion of pretrained mask encoder features and adapted features, a learnable fusion coefficient $\lambda$ is introduced:

\begin{equation}
\text{emb}_{\text{mask}} = \mathcal{E}(\mathbf{M}^{*}_{\text{bin}}) + \lambda \cdot \text{emb}_{\text{adapt}},
\end{equation}
where $\mathcal{E}(\cdot)$ represents the embedding extraction operation of the SAM mask encoder, $\mathbf{M}^{}_{\text{bin}} = \mathbf{M}^{} > 0.5$ is the binarized version of the foreground prior, and $\lambda$ is a learnable scalar parameter that dynamically adjusts the contribution ratio of pretrained features and adapted features.

\subsection{Foreground-Background Decoupled Refinement Module}

In few-shot segmentation, foreground objects exhibit coherent semantic structures with consistent feature distributions, while background regions display complex variations and contextual associations. This fundamental asymmetry necessitates treating foreground and background regions differently during refinement rather than as a homogeneous entity. Building on this observation, we propose a Foreground-Background Decoupled Refinement Module that integrates two complementary prior masks: $\mathbf{M}^* \in \mathbb{R}^{H \times W}$ from converged position priors and $\mathbf{M}_{\text{sam}} \in \mathbb{R}^{H \times W}$ from the SAM decoder. While $\mathbf{M}^*$ identifies discriminative semantic regions with high reliability but lower boundary precision, $\mathbf{M}_{\text{sam}}$ provides detailed boundary information through SAM's strong prompt segmentation capabilities.

To utilize multi-scale feature representation, a four-level feature pyramid network with scale parameters $s \in \{512, 256, 128, 64\}$ is constructed. At each scale level, a feature decoupling network separately extracts discriminative features for foreground and background:

\begin{align}
  \mathbf{F}_{\text{loc\_fg}}^s &= \mathcal{D}(\mathbf{F}^s \odot \mathbf{M}^{*s}), \\
  \mathbf{F}_{\text{loc\_bg}}^s &= \mathcal{D}(\mathbf{F}^s \odot (1 - \mathbf{M}^{*s})), \\
  \mathbf{F}_{\text{sam\_fg}}^s &= \mathcal{D}(\mathbf{F}^s \odot \mathbf{M}_{\text{sam}}^s), \\
  \mathbf{F}_{\text{sam\_bg}}^s &= \mathcal{D}(\mathbf{F}^s \odot (1 - \mathbf{M}_{\text{sam}}^s)),
  \end{align}
where $\mathbf{F}^s \in \mathbb{R}^{C \times H_s \times W_s}$ is the input feature map and $\mathcal{D}(\cdot)$ represents the feature decoupling network. Considering that accurate boundary localization is crucial for segmentation quality, a dedicated edge-aware module is introduced:

\begin{equation}
\mathbf{E}^s = \mathcal{B}(\mathbf{F}^s, |\mathbf{M}^{*s} - \mathbf{M}_{\text{sam}}^s|),
\end{equation}
where $\mathcal{B}(\cdot)$ is an edge-aware module that generates boundary attention weights through the mask difference map $|\mathbf{M}^{*s} - \mathbf{M}_{\text{sam}}^s|$. The multi-source feature fusion process can be formalized as:

\begin{equation}
  \mathbf{F}_{\text{ms}}^s = \mathcal{M}\left(\mathbf{F}_{\text{loc\_fg}}^s \Vert \mathbf{F}_{\text{loc\_bg}}^s \Vert \mathbf{F}_{\text{sam\_fg}}^s \Vert \mathbf{F}_{\text{sam\_bg}}^s \Vert \mathbf{E}^s \right),
  \end{equation}
where $\Vert$ represents channel-dimension concatenation, and $\mathcal{M}(\cdot)$ is a multi-modal fusion function composed of gated attention units with weight coefficients $\alpha_i^s \in [0,1]$ calculated as:

\begin{equation}
\alpha_i^s = \sigma\left(\mathbf{W}_g^s \cdot \mathcal{G}(\mathbf{F}_i^s) + \mathbf{b}_g^s\right),
\end{equation}
where $\sigma(\cdot)$ is the Sigmoid activation function and $\mathcal{G}(\cdot)$ is a global context extraction operator. This mechanism enhances effective features by measuring the correlation between feature channels and target regions. The progressive optimization process implements cross-scale feature refinement through a cascade architecture:
\begin{equation}
\mathbf{\hat{M}}^s = \Psi^s\left(\mathcal{C}(\mathbf{F}_{\text{ms}}^s, \mathcal{U}(\mathbf{\hat{M}}^{s/2})) \right),
\end{equation}
where $\mathcal{C}(\cdot,\cdot)$ represents cross-scale feature calibration operations including spatially-aware feature alignment, $\mathcal{U}(\cdot)$ is an upsampling operation, and $\Psi^s(\cdot)$ is an optimization unit consisting of three cascaded depth-separable convolutions, enhancing local-global feature interaction while maintaining computational efficiency.

The final segmentation result is generated through multi-scale feature aggregation:
\begin{equation}
\mathbf{M}_{\text{refined}} = \sum_{s \in \mathcal{S}} \mathcal{R}_s(\mathbf{\hat{M}}^s) \odot \mathbf{A}^s,
\end{equation}
where $\mathcal{S} = \{64,128,256,512\}$ is the scale set, $\mathcal{R}_s(\cdot)$ is a scale-adaptive convolution layer, and $\mathbf{A}^s \in \mathbb{R}^{H \times W}$ is a scale weight matrix learned through an attention mechanism. This design achieves optimal balance between semantic consistency and boundary precision by explicitly modeling spatial correlations of multi-scale features.

\subsubsection{Total Loss}
The total loss function is defined as a weighted combination of the initial segmentation loss and the refinement loss, with respective weights of 0.3 and 0.7. At each stage, the loss comprises two components: Binary Cross-Entropy loss and Dice loss. The Dice loss is formulated to optimize the spatial continuity of the segmentation results by maximizing the overlap between the predicted and ground truth regions, while the BCE loss focuses on improving the classification accuracy of the segmentation outputs by penalizing incorrect predictions. Mathematically, the Dice loss is expressed as:

\begin{equation}
\mathcal{L}_{\text{dice}} = 1 - \frac{2 \sum y_{i,j} \cdot \hat{y}_{i,j}}{\sum y_{i,j}^2 + \sum \hat{y}_{i,j}^2},
\end{equation}
and the BCE loss is given by:

\begin{equation}
\mathcal{L}_{\text{bce}} = -\frac{1}{H \times W} \sum \big(y_{i,j}\log(\hat{y}_{i,j}) + (1-y_{i,j})\log(1-\hat{y}_{i,j})\big),
\end{equation}
where \( y_{i,j} \) represents the ground truth and \( \hat{y}_{i,j} \) denotes the predicted value at pixel \((i, j)\).

\section{Experiments}

\subsection{Datasets and Evaluation Metrics}

\textbf{Datasets}: Following established protocols~\cite{tian2020prior,shi2022dense,min2021hypercorrelation,wang2019panet,MASNet, bor}, three benchmarks were utilized for evaluation: PASCAL-5$^i$\cite{shaban2017one}, COCO-20$^i$\cite{nguyen2019feature}, and FSS-1000~\cite{li2020fss}. PASCAL-5$^i$ is constructed from PASCAL VOC 2012~\cite{everingham2010pascal} with SBD~\cite{hariharan2011semantic} annotations, containing 20 semantic categories divided into 4 folds of 5 classes each, with 10,582 training and 1,449 validation images. COCO-20$^i$ is built upon MS COCO~\cite{lin2014microsoft}, with 80 categories in 4 folds of 20 classes each (82,081 training, 40,137 validation images), presenting challenges through dense multi-instance annotations, complex scenes with occlusions, and numerous small objects. FSS-1000~\cite{li2020fss} focuses on fine-grained segmentation across 1,000 categories with 10,000 samples (split 520:240:240), characterized by high inter-class similarity and limited samples per category.

\textbf{Evaluation Metrics}: Performance assessment was carried out using cross-fold validation, where models were trained on three folds and tested on the remaining fold to evaluate their generalization capability to novel classes. Two complementary metrics were employed to ensure a comprehensive evaluation. The first metric, class-wise mean Intersection-over-Union ($\text{mIoU}$), calculated the average IoU across all test classes, defined as $\text{mIoU} = \frac{1}{C} \sum_{c=1}^{C} \text{IoU}_c$. The second metric, Foreground-Background IoU ($\text{FB-IoU}$), measured the average IoU between foreground and background regions, defined as $\text{FB-IoU} = \frac{1}{2} (\text{IoU}_\text{F} + \text{IoU}_\text{B})$.

\subsection{Implementation Details}

\textbf{Implementation Details}: All experiments were implemented on the PyTorch framework~\footnote{https://pytorch.org} using 4 NVIDIA RTX 3090 GPUs. Following previous method~\cite{feng2024boosting}, DINOv2 (ViT-Base-P14)~\cite{oquab2023dinov2} was employed as the feature extractor, with input images resized to 896$\times$896 pixels to extract 64$\times$64 feature maps. The SAM model operated at 1024$\times$1024 resolution, producing matching 64$\times$64 feature maps to maintain spatial consistency. Position prior generation was configured with 8 neighbors and a temperature coefficient of 0.1. Training was conducted using AdamW optimizer with a 1e-4 learning rate and batch size of 2 for 20 epochs on PASCAL-5$^i$ and FSS-1000, and 5 epochs on COCO-20$^i$. Each training process on PASCAL-5$^i$ was completed in 0.7 GPU day, with the peak GPU memory utilization reaching 22GB per GPU during training. Data augmentation included random horizontal flipping, scale transformations (0.8-1.2$\times$), rotation ($\pm$10°), and brightness/contrast adjustments. For K-shot evaluations, predictions were averaged as $\mathbf{M}_{\text{final}} = \frac{1}{K} \sum_{k=1}^K \mathbf{M}_k$.

 \subsection{Comparison with State-of-the-Art Methods}

 \begin{table}[htbp]
  \centering
  \caption{Comparison of 1-shot performance on $\text{PASCAL-}5^i$ dataset. Best in \textbf{bold}, second best in \underline{underlined}.}
  \label{tab:CMaP_pascal_1shot}
      \scalebox{0.7}{
      \begin{tabular}{cccc|cccccc}
        \toprule
        \multirow{2}{*}{\shortstack{\textbf{Category}}} & \multirow{2}{*}{\shortstack{\textbf{Backbone}}} & \multirow{2}{*}{\textbf{Method}} & \multirow{2}{*}{\textbf{Publication}} & \multicolumn{6}{c}{\textbf{1-shot Performance}} \\ 
        & & && $\mathbf{5^{0}}$ & $\mathbf{5^{1}}$ &$\mathbf{5^{2}}$ & $\mathbf{5^{3}}$ & \textbf{mIoU} & \textbf{FB-IoU} \\
        \midrule
        \multirow{12}{*}{\shortstack{Specialized \\ Models}}& \multirow{7}{*}{ResNet-50}  
        & PFENet~\cite{tian2020prior} &TPAMI'20 & 61.7 & 69.5 & 55.4 & 56.3 & 60.8 & 73.3  \\
        && RePRI~\cite{boudiaf2021few}& CVPR'21 & 59.8 & 68.3 & 62.1 & 48.5 & 59.7 & -  \\
        && HSNet~\cite{min2021hypercorrelation} &ICCV'21 & 64.3 & 70.7 & 60.3 & 60.5 & 64.0 & 76.7\\
        && DCAMA~\cite{shi2022dense}&ECCV'22  & 67.5 & 72.3 & 59.6 & 59.0 & 64.6 & 75.7 \\

        && ABCNet~\cite{Wang_2023_CVPR} &CVPR'23 &68.8&73.4&62.3&59.5&66.0&76.0  \\ 
        && MASNet~\cite{MASNet}&NEUCOM'24 & 61.7&68.2&67.4&50.3&61.9&-\\
        && BOR~\cite{bor} &NEUCOM'25& 69.2 & \underline{74.7} & \underline{67.8} & 60.4 & 68.0 & - \\

          \cmidrule(lr){2-9}
         & \multirow{5}{*}{ResNet-101} 
         & PFENet~\cite{tian2020prior} &TPAMI'20 & 60.5 & 69.4 & 54.4 & 55.9 & 60.1 & 72.9  \\
         && HSNet~\cite{min2021hypercorrelation}&ICCV'21  & 67.3 & 72.3 & 62.0 & 63.1 & 66.2 & 77.6   \\
         && DCAMA~\cite{shi2022dense} &ECCV'22 & 65.4 & 71.4 & 63.2 & 58.3 & 64.6 & 77.6  \\ 
         && HPA~\cite{cheng2023hpa}&TPAMI'23   & 66.4 & 72.7 & 64.1 & 59.4 & 65.6 & 76.6  \\
        && ABCNet~\cite{Wang_2023_CVPR} &CVPR'23 &65.3&72.9&65.0&59.3&65.6&\underline{78.5}   \\  

        \midrule

      \multirow{4}{*}{\shortstack{General \\ Models}} & \multirow{4}{*}{SAM} 
      & FSS-SAM~\cite{feng2024boosting} & Arxiv'24& \underline{70.2} & 74.0 & \textbf{67.9} & 62.0 & \underline{68.5} & - \\

      && PerSAM~\cite{zhang2023personalize}&ICLR'24 & - & - & - &- & 48.5 & - \\

      && Matcher~\cite{liu2023matcher}&ICLR'24 & 67.7 & 70.7 & 66.9 & \textbf{67.0} & 68.1 & - \\

      &&  CMaP-SAM (ours) &- & \textbf{78.1 }& \textbf{75.5} & 65.0 & \underline{66.0} & \textbf{71.1} &\textbf{80.8}  \\
        \bottomrule
      \end{tabular}
      }
\end{table}

\begin{table}[htbp]
  \centering
  \caption{Comparison of 5-shot performance on $\text{PASCAL-}5^i$ dataset. Best in \textbf{bold}, second best in \underline{underlined}.}
  \label{tab:CMaP_pascal_5shot}
  \scalebox{0.7}{
      \begin{tabular}{cccc|cccccc}
        \toprule
        \multirow{2}{*}{\shortstack{\textbf{Category}}} & \multirow{2}{*}{\shortstack{\textbf{Backbone}}} & \multirow{2}{*}{\textbf{Method}} & \multirow{2}{*}{\textbf{Publication}} & \multicolumn{6}{c}{\textbf{5-shot Performance}} \\ 
        && & & $\mathbf{5^{0}}$ & $\mathbf{5^{1}}$ &$\mathbf{5^{2}}$ & $\mathbf{5^{3}}$ & \textbf{mIoU} & \textbf{FB-IoU} \\
        \midrule
        \multirow{12}{*}{\shortstack{Specialized \\ Models}}& \multirow{7}{*}{ResNet-50}  
        & PFENet~\cite{tian2020prior} &TPAMI'20 & 63.1 & 70.7 & 55.8 & 57.9 & 61.9 & 73.9  \\
        && RePRI~\cite{boudiaf2021few} &CVPR'21 & 64.6 & 71.4 & \underline{71.1} & 59.3 & 66.6 & -   \\
        && HSNet~\cite{min2021hypercorrelation} &ICCV'21 & 70.3 & 73.2 & 67.4 & 67.1 & 69.5 & 80.6\\
        && DCAMA~\cite{shi2022dense} &ECCV'22 &  70.5 & 73.9 & 63.7 & 65.8 & 68.5 & 79.5 \\
        && ABCNet~\cite{Wang_2023_CVPR} &CVPR'23 &71.7&74.2&65.4&67.0&69.6&80.0 \\  
        && MASNet~\cite{MASNet}& NEUCOM'24 & 67.7&72.4&73.6&59.9&68.4&-\\
        && BOR~\cite{bor} &NEUCOM'25& 70.8 & 75.9 & \underline{71.1} & 67.9 & 71.4 & - \\
          \cmidrule(lr){2-9}

         & \multirow{5}{*}{ResNet-101} 
         & PFENet~\cite{tian2020prior}&TPAMI'20  & 62.8 & 70.4 & 54.9 & 57.6 & 61.4 & 73.5 \\
         && RePRI~\cite{boudiaf2021few}&CVPR'21 & 66.2 & 71.4 & 67.0 & 57.7 & 65.6 & - \\
         && HSNet~\cite{min2021hypercorrelation}&ICCV'21  & 71.8 & 74.4 & 67.0 & 68.3 & 70.4 & 80.6   \\
         && DCAMA~\cite{shi2022dense} &ECCV'22 & 70.7 & 73.7 & 66.8 & 61.9 & 68.3 & \underline{80.8}  \\ 
        && ABCNet~\cite{Wang_2023_CVPR}&CVPR'23 &71.4&75.0&68.2&63.1&69.4&\underline{80.8}  \\  

        \midrule

        \multirow{3}{*}{\shortstack{General \\ Models}} & \multirow{3}{*}{SAM} 
        & FSS-SAM~\cite{feng2024boosting}&Arxiv'24 &  \underline{71.9} & 75.3 & \underline{71.1} & 68.3 & 71.6 & - \\

        && Matcher~\cite{liu2023matcher}&ICLR'24 & 71.4 & \underline{77.5} & \textbf{74.1} & \underline{72.8} & \underline{74.0} & -  \\

        &&  CMaP-SAM (ours)  &- & \textbf{82.2}  & \textbf{81.2} & 68.5 & \textbf{73.2} & \textbf{76.3} & \textbf{85.1}\\
        \bottomrule
      \end{tabular}
  }
\end{table}

Table~\ref{tab:CMaP_pascal_1shot} and Table~\ref{tab:CMaP_pascal_5shot} show the 1-shot and 5-shot performance on the $\text{PASCAL-}5^i$ dataset, respectively. The proposed CMaP-SAM method consistently demonstrated superior performance compared to existing state-of-the-art methods across various backbones. In the 1-shot setting, CMaP-SAM achieved an mIoU of 71.1 and an FB-IoU of 80.8, exceeding the previous best specialized method (ABCNet) by 5.5 points in mIoU. Additionally, CMaP-SAM outperformed the best general model (FSS-SAM) by 2.6 points in mIoU. In the 5-shot setting, CMaP-SAM further improved its performance, achieving an mIoU of 76.3 and an FB-IoU of 85.1.

\begin{table}[t]
  \centering
  \caption{Comparison of 1-shot performance on $\text{COCO-}20^i$ dataset. Best in \textbf{bold}, second best in \underline{underlined}.}
  \label{tab:CMaP_coco_1shot}
      \scalebox{0.7}{
      \begin{tabular}{cccc|cccccc}
        \toprule
        \multirow{2}{*}{\shortstack{\textbf{Category}}} & \multirow{2}{*}{\shortstack{\textbf{Backbone}}} & \multirow{2}{*}{\textbf{Method}} & \multirow{2}{*}{\textbf{Publication}} & \multicolumn{6}{c}{\textbf{1-shot Performance}} \\ 
        & & && $\mathbf{20^{0}}$ & $\mathbf{20^{1}}$ &$\mathbf{20^{2}}$ & $\mathbf{20^{3}}$ & \textbf{mIoU} & \textbf{FB-IoU} \\
        \midrule
        \multirow{12}{*}{\shortstack{Specialized \\ Models}}& \multirow{6}{*}{ResNet-50}  
        & PFENet~\cite{tian2020prior} &TPAMI'20 & 36.5 & 38.6 & {34.5} & {33.8} & {35.8} & -  \\
        && RePRI~\cite{boudiaf2021few} &CVPR'21& 32.0 & 38.7 &  32.7 & 33.1 & 34.1 & -  \\
        && HSNet~\cite{min2021hypercorrelation} &ICCV'21 & 36.3 & 43.1 & 38.7 & 38.7 & 39.2 & 68.2\\
        && DCAMA~\cite{shi2022dense} &ECCV'22 &  41.9 & 45.1 & 44.4 & 41.7 & 43.3 & 69.5\\
        && ABCNet~\cite{Wang_2023_CVPR} &CVPR'23 &42.3&46.2&46.0&42.0&44.1&\\
        && BOR~\cite{bor} &NEUCOM'25 & 43.7 & 53.1 & 50.8 & 46.0 & 48.4 & - \\ 

          \cmidrule(lr){2-9}
         & \multirow{5}{*}{ResNet-101} 
         & PFENet~\cite{tian2020prior} &TPAMI'20 & {36.8} & {41.8} & {38.7} & {36.7} & {38.5} & {63.0}  \\
         && HSNet~\cite{min2021hypercorrelation}&ICCV'21  &  {37.2} & {44.1} & {42.4} & {41.3} & {41.2} & 69.1    \\
         && DCAMA~\cite{shi2022dense} &ECCV'22 &  41.5 & 46.2 & 45.2 & 41.3 & 43.5 & \underline{69.9}  \\ 
         && SSP~\cite{fan2022self} &ECCV'22  & 39.1 & 45.1& 42.7 & 41.2& 42.0& -  \\
         && HPA~\cite{cheng2023hpa} &TPAMI'23  &43.1 & 50.0 & 44.8 & 45.2 & 45.8 & 68.4 \\
         && MASNet~\cite{MASNet} &NEUCOM'24 & 50.6&39.5&33.1&30.0&38.3&-\\
         
        \midrule

          \multirow{5}{*}{\shortstack{General \\ Models}} & \multirow{5}{*}{SAM} 
          & FSS-SAM~\cite{feng2024boosting} &Arxiv'24 &  39.1 & 50.4 & 48.4 & 43.1 & 45.3 & - \\

          && Matcher~\cite{liu2023matcher}&ICLR'24 & \underline{52.7} & 53.5 & 52.6& \underline{52.1} & 52.7 & - \\

          && PerSAM~\cite{zhang2023personalize}&ICLR'24 & - & - & - &- & 23.5 & - \\

          && VRP-SAM~\cite{sun2024vrp}&CVPR'24 & 48.1 & \underline{55.8} & \textbf{60.0} & 51.6  &\underline{53.9}& - \\

        &&  CMaP-SAM (ours)  &- &  \textbf{54.0} &\textbf{56.1}& \underline{58.9}& \textbf{55.3}& \textbf{56.1} & \textbf{75.8}\\
        \bottomrule
      \end{tabular}}
\end{table}

Table~\ref{tab:CMaP_coco_1shot} and Table~\ref{tab:CMaP_coco_5shot} provide a performance comparison of the proposed CMaP method with existing approaches on the COCO-$20^i$ dataset for 1-shot and 5-shot tasks, respectively. In the general model category, CMaP-SAM demonstrated significant performance improvements over FSS-SAM~\cite{feng2024boosting}, achieving an average mIoU of 56.1 and an FB-IoU of 75.8 in the 1-shot setting. In the 5-shot setting, CMaP-SAM achieved an average mIoU of 65.3, outperforming the best general model (FSS-SAM) by 16.1 points.

\begin{table}[t]
  \centering
  \caption{Comparison of 5-shot performance on $\text{COCO-}20^i$ dataset. Best in \textbf{bold}, second best in \underline{underlined}.}
  \label{tab:CMaP_coco_5shot}
      \scalebox{0.7}{
      \begin{tabular}{cccc|cccccc}
        \toprule
        \multirow{2}{*}{\shortstack{\textbf{Category}}} & \multirow{2}{*}{\shortstack{\textbf{Backbone}}} & \multirow{2}{*}{\textbf{Method}} & \multirow{2}{*}{\textbf{Publication}} & \multicolumn{6}{c}{\textbf{5-shot Performance}} \\ 
        & & && $\mathbf{20^{0}}$ & $\mathbf{20^{1}}$ &$\mathbf{20^{2}}$ & $\mathbf{20^{3}}$ & \textbf{mIoU} & \textbf{FB-IoU} \\
        \midrule
        \multirow{12}{*}{\shortstack{Specialized \\ Models}}& \multirow{6}{*}{ResNet-50}  
        & PFENet~\cite{tian2020prior}&TPAMI'20  & 36.5 & 43.3 & 37.8 & 38.4 & 39.0  & -  \\
        && RePRI~\cite{boudiaf2021few} &CVPR'21 &39.3 & 45.4 & 39.7 & 41.8 & 41.6 & -   \\
        && HSNet~\cite{min2021hypercorrelation} &ICCV'21  &43.3 & 51.3 & 48.2 & 45.0 & 46.9 &70.7\\
        && DCAMA~\cite{shi2022dense} &ECCV'22 &  45.9 & 50.5 & 50.7 & 46.0 & 48.3 & 71.7  \\
        && ABCNet~\cite{Wang_2023_CVPR} & CVPR'23  &45.5&51.7&52.6&46.4&49.1&72.7\\  
        && BOR~\cite{bor} & NEUCOM'25 & 52.3 & 57.9 & 53.3 & 49.7 & 53.3 & - \\

          \cmidrule(lr){2-9}
         & \multirow{6}{*}{ResNet-101} 
         & PFENet~\cite{tian2020prior} & TPAMI'20  & {40.4} & {46.8} & {43.2} & {40.5} & {42.7} & {65.8}  \\
         && HSNet~\cite{min2021hypercorrelation} & ICCV'21  & {45.9} & {53.0} & {51.8} & {47.1} & {49.5} & {72.4}  \\
         && DCAMA~\cite{shi2022dense} & ECCV'22 &  48.0 & \underline{58.0} & \underline{54.3} & 47.1 & 51.9 & 73.3  \\ 
         && SSP~\cite{fan2022self} & ECCV'22  & 47.4 & 54.5 & 50.4 & 49.6 & 50.2 & -  \\
         && HPA~\cite{cheng2023hpa} & TPAMI'23  & \underline{49.2} & 57.8 & 52.0 & \underline{50.6} & \underline{52.4} & \underline{74.0}  \\
         
         && MASNet~\cite{MASNet} &NEUCOM'24 & 56.7&47.0&38.7&40.4&45.7&-\\

        \midrule
      \multirow{2}{*}{\shortstack{General \\ Models}} & \multirow{2}{*}{SAM} 
      & FSS-SAM~\cite{feng2024boosting} & Arxiv'24 &  47.3 & 54.1 & 48.2 & 47.1 & 49.2 & - \\
        &&  CMaP-SAM & - & \textbf{61.3} & \textbf{66.8} & \textbf{67.3} & \textbf{65.9} & \textbf{65.3} & \textbf{81.1} \\
        \bottomrule
      \end{tabular}}
\end{table}

Table~\ref{tab:comparison_PGMA-Net_GF-SAM} provides a detailed comparison of the proposed CMaP-SAM method with PGMA-Net~\cite{chen2024visual} and GF-SAM~\cite{zhang2024bridge} on both $\text{PASCAL-}5^i$ and $\text{COCO-}20^i$ datasets. The results indicate that CMaP-SAM outperforms PGMA-Net in both datasets, achieving an mIoU of 75.6 on $\text{PASCAL-}5^i$ and 60.6 on $\text{COCO-}20^i$ when using CLIP, while maintaining competitive performance without CLIP. Additionally, CMaP-SAM+ (using SAM-Huge) surpasses GF-SAM by 2.1 points on $\text{PASCAL-}5^i$ and 1.8 points on $\text{COCO-}20^i$, demonstrating the effectiveness of the proposed method in enhancing segmentation performance.

\begin{table}[t]
  \centering
  \caption{In-depth comparison with PGMA-Net~\cite{chen2024visual} and GF-SAM~\cite{zhang2024bridge} on $\text{PASCAL-}5^i$  and $\text{COCO-}20^i$ datasets. Best in \textbf{bold}, second best in \underline{underlined}.}
  \label{tab:comparison_PGMA-Net_GF-SAM}
  \scalebox{0.75}{
  \begin{tabular}{cc|c|c}
      \toprule
      \multirow{2}{*}{\textbf{Method}} & \multirow{2}{*}{\textbf{Details}}  & \textbf{$\text{PASCAL-}5^i$} & \textbf{$\text{COCO-}20^i$} \\
      &  &\textbf{mIoU} & \textbf{mIoU} \\
      \midrule
      PGMA-Net~\cite{chen2024visual} & using CLIP  & 
      \underline{74.1} & \underline{59.4} \\
      CMaP-SAM (Ours)  & no CLIP & 71.1 & 56.1 \\
      CMaP-SAM* (Ours) & using CLIP & \textbf{75.6} & \textbf{60.6} \\
      \midrule
      GF-SAM~\cite{zhang2024bridge} & SAM-Huge   & \underline{72.1} & \underline{58.7} \\
      CMaP-SAM (Ours) & SAM-Base   & 71.1 & 56.1 \\
      CMaP-SAM+ (Ours) & SAM-Huge  & \textbf{74.2} & \textbf{60.5} \\
      \bottomrule
  \end{tabular}
  }
\end{table}

Table~\ref{tab:CMaP_fss1000_comparison} provides a performance comparison between the proposed CMaP-SAM method and existing approaches on the FSS-1000 dataset for both 1-shot and 5-shot segmentation tasks. The results indicated that CMaP-SAM achieved superior performance, with an mIoU of 90.2 for the 1-shot task and 90.5 for the 5-shot task. In contrast, HSNet with a ResNet-50 backbone obtained mIoU values of 85.5 and 87.8 for the 1-shot and 5-shot tasks, respectively, while VAT achieved mIoU scores of 89.5 and 90.3 for the same tasks.

\begin{table}[htbp]
  \centering
  \caption{Comparison on FSS-1000 dataset. Best in \textbf{bold}, second best in \underline{underlined}.}
  \label{tab:CMaP_fss1000_comparison}
  \scalebox{0.75}{
  \begin{tabular}{cc|c|c}
      \toprule
      \multirow{2}{*}{\textbf{Method}} & \multirow{2}{*}{\textbf{Backbone}} & \textbf{1-shot Performance} & \textbf{5-shot Performance} \\
      & & \textbf{mIoU} & \textbf{mIoU} \\
      \midrule
      HSNet~\cite{min2021hypercorrelation} & ResNet-50  & 85.5 & 87.8 \\
      VAT~\cite{HongCNLK22}             & ResNet-50  & 89.5 & 90.3 \\
      HSNet~\cite{min2021hypercorrelation} + MSI~\cite{moon2023msi}  & ResNet-50  & 87.5 & 88.4 \\
      VAT~\cite{HongCNLK22} + MSI~\cite{moon2023msi}             & ResNet-50  & 90.0 &  \underline{90.6} \\

      DACM~\cite{xiong2022doubly}    & ResNet-50  & \textbf{90.7} & \textbf{91.6} \\

      CMaP-SAM (Ours) & SAM & \underline{90.2} & 90.5 \\
      \bottomrule
  \end{tabular}
  }
\end{table}

  \subsection{Ablation Studies}

  To evaluate the effectiveness of CMaP-SAM, ablation studies were performed on the PASCAL-$5^i$ dataset (fold-0), as shown in Table~\ref{tab:CMaP_ablation_study}. The removal of the contraction mapping module resulted in decreased performance, with 1-shot and 5-shot mIoU dropping by 1.3 and 0.9, respectively, confirming that the position priors enhanced spatial localization. The adapter module was identified as the most critical component, as its removal caused the largest performance decline (5.1 for 1-shot and 4.3 for 5-shot), emphasizing the importance of feature distribution alignment in FSS-SAM integration. The refinement module's contribution became more pronounced with an increasing number of support samples, leading to a performance drop of 0.6 for 1-shot and 0.9 for 5-shot when removed. Further experiments revealed synergistic effects among the components, with the simultaneous removal of module pairs leading to disproportionately larger performance declines compared to individual removals. These findings validated the integrated design of the framework and demonstrated the complementary roles of spatial priors, feature alignment, and refinement in few-shot segmentation tasks.

  \begin{table}[t]
    \centering
    \caption{Ablation studies of the CMaP-SAM.}
    \label{tab:CMaP_ablation_study}
    \scalebox{0.85}{
    \begin{tabular}{ccc|cccc}
        \toprule
        \multicolumn{3}{c|}{\textbf{Components}} & \multicolumn{2}{c}{\textbf{1-shot Performance}} & \multicolumn{2}{c}{\textbf{5-shot Performance}} \\
        \textbf{Contraction Mapping Prior} & \textbf{Adapter} & \textbf{Refiner} & \textbf{mIoU} & \textbf{FB-IoU} & \textbf{mIoU} & \textbf{FB-IoU} \\
        \midrule
        $\checkmark$ & $\checkmark$ & $\checkmark$ & \textbf{78.1} & \textbf{88.2} & \textbf{82.2} & \textbf{90.9} \\
         & $\checkmark$ & $\checkmark$ & 76.8 & 87.4 & \underline{81.3} & 90.4 \\
        $\checkmark$ &  & $\checkmark$ & 73.0 & 84.8 & 77.9 & 88.1 \\
        $\checkmark$ & $\checkmark$ &  & \underline{77.5} & \underline{88.1} & \underline{81.3} & \underline{90.5} \\
        $\checkmark$ &  &  & 72.3 & 84.6 & 77.1 & 88.1 \\
         & $\checkmark$ &  & 74.6 & 86.2 & 79.5 & 89.4 \\
         &  & $\checkmark$ & 72.0 & 84.3 & 77.7 & 88.0 \\
        \bottomrule
    \end{tabular}}
\end{table}

\begin{table}[htbp]
  \centering
  \caption{Sensitivity analysis with respect to the number of reference images $K$ on Pascal-5$^i$ (Fold-0) and COCO-20$^i$ (Fold-0).}
  \label{tab:sensitivity_study}
  \scalebox{0.9}{
  \begin{tabular}{c|cc|cc|cc|cc}
      \toprule
      & \multicolumn{4}{c|}{\textbf{Pascal-5$^i$ (Fold-0)}} & \multicolumn{4}{c}{\textbf{COCO-20$^i$ (Fold-0)}} \\
      \cmidrule(lr){2-5} \cmidrule(lr){6-9}
      \textbf{Quantity $\mathbf{K}$}
      & \textbf{mIoU} & $\triangle$ & \textbf{FB-IoU} & $\triangle$
      & \textbf{mIoU} & $\triangle$ & \textbf{FB-IoU} & $\triangle$ \\
      \midrule
      1-shot  & 78.1 & --   & 88.2 & --   & 54.0 & --   & 73.5 & --   \\
      2-shot  & 80.3 & 2.2  & 89.8 & 1.6  & 58.2 & 4.2  & 75.8 & 2.3  \\
      3-shot  & 81.4 & 3.3  & 90.4 & 2.2  & 59.3 & 5.3  & 76.9 & 3.4  \\
      4-shot  & 82.3 & 4.2  & 90.9 & 2.7  & 60.7 & 6.7  & 77.2 & 3.7  \\
      5-shot  & 82.2 & 4.1  & 90.9 & 2.7  & 61.3 & 7.3  & 77.5 & 4.0  \\
      6-shot  & 82.2 & 4.1  & 90.9 & 2.7  & \textbf{61.8} & 7.8  & \textbf{78.3} & 4.8  \\
      7-shot  & \textbf{82.5} & 4.4 & \textbf{91.1} & 2.9 & \textbf{61.8} & 7.8  & \textbf{78.3} & 4.8  \\
      8-shot  & 82.4 & 4.3  & \textbf{91.1} & 2.9 & \textbf{61.8} & 7.8  & \textbf{78.3} & 4.8  \\
      9-shot  & \textbf{82.5} & 4.4 & \textbf{91.1} & 2.9 & \textbf{61.8} & 7.8  & \textbf{78.3} & 4.8  \\
      10-shot & \textbf{82.5} & 4.4 & \textbf{91.1} & 2.9 & \textbf{61.8} & 7.8  & \textbf{78.3} & 4.8  \\
      \bottomrule
  \end{tabular}}
\end{table}

\subsection{Sensitivity Analysis}

As shown in Table~\ref{tab:sensitivity_study}, the performance improved as the number of reference images increased from $K=1$ to $K=5$ (mIoU: 78.1 $\rightarrow$ 82.2; FB-IoU: 88.2 $\rightarrow$ 90.9), with diminishing returns observed beyond $K=5$. The peak performance was achieved at $K=7$ (mIoU = 82.5). For optimal efficiency, selecting 3-5 representative samples is recommended for practical applications.

To investigate whether the sensitivity to the number of support images varies across datasets, we extended the analysis to COCO-20\textsuperscript{\textit{i}} (Fold-0). The right part of Table~\ref{tab:sensitivity_study} shows that the performance gain per additional shot is similar to that observed on PASCAL-5\textsuperscript{\textit{i}}, confirming the method's cross-dataset insensitivity to support-set size.

Moreover, we performed another sensitivity analysis to evaluate the effect of stacking more convolution blocks on model performance. As shown in Table~\ref{tab:sensitivity_analysis_adapter_complexity}, increasing the number of blocks from 1 to 5 only yields marginal gains in mIoU (from 78.1\% to 78.4\%). This indicates that a single convolution block is sufficient for balancing performance and efficiency.

\begin{table}[htbp]
  \centering
  \caption{Sensitivity analysis with respect to the complexity of the adapter (number of stacked convolution blocks inside the adapter).}
  \begin{tabular}{c|ccccc}
  \hline
  \textbf{Number of Blocks} & 1 & 2 & 3 & 4 & 5 \\
  \hline
  \textbf{Performance (mIoU)} & 78.1 & 78.1 & 78.3 & 78.4 & 78.4 \\
  \hline
  \end{tabular}
  \label{tab:sensitivity_analysis_adapter_complexity}
  \end{table}

\subsection{Complexity analysis}

Table~\ref{tab:CMaP_complexity_analysis} presents the computational complexity analysis, including the number of parameters, multiply-accumulate operations (MACs), and inference throughput (FPS). Notably, CMaP-SAM introduces only 2.0M trainable parameters, which is significantly fewer than competing methods such as DCAMA (14.2M). However, due to inheriting SAM's large ViT-based image encoder, CMaP-SAM incurs higher total parameter count and MACs, resulting in a lower inference speed. This trade-off prioritizes segmentation accuracy over real-time efficiency. Therefore, CMaP-SAM is more suitable for offline processing scenarios, such as post-hoc analysis of medical imaging~\cite{cheng2023sam,yue2024surgicalsam}, and the real-time deployment is left for future work via lightweight backbones (e.g., EfficientSAM~\cite{xiong2024efficientsam}), pruning, or distillation.

\begin{table}[htbp]
  \centering
  \caption{Complexity analysis of CMaP-SAM and comparison with other methods.}
  \label{tab:CMaP_complexity_analysis}
  \scalebox{0.85 }{ 
    \begin{tabular}{c|cccccc}
      \toprule
      \multirow{2}{*}{\textbf{Method}}&\multicolumn{3}{c|}{\textbf{params (M)}} &\textbf{MACs}& \textbf{Speed}  \\
      &\textbf{learnable} &\textbf{frozen} &\textbf{total} &\textbf{(G)}&\textbf{(fps)}\\

      \midrule
      HSNet&2.6&25.6&28.2&\textbf{20.1}&16.0 \\
      DCAMA&14.2&25.6&39.8&39.8&\textbf{19.7}\\
      PGMA-Net&2.7&23.6&26.3&42.3&10.5\\
          \midrule
      CMaP-SAM(Ours)&\textbf{2.0}&177.2&179.2&87.6&1.9\\
      \bottomrule
    \end{tabular}
    }
  
\end{table}

\begin{figure}[htbp]
  \centering
  \includegraphics[width=0.99\textwidth]{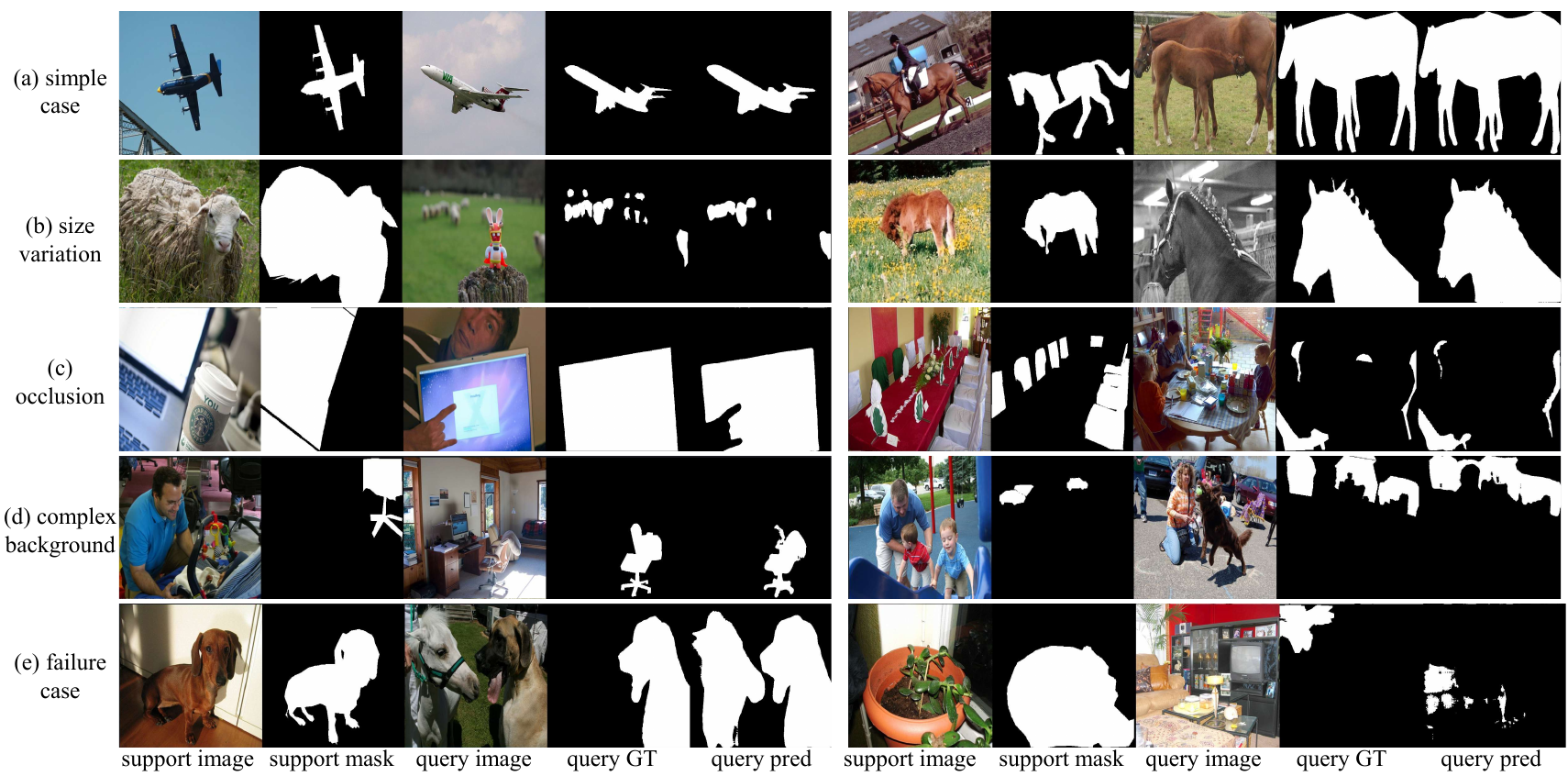}
  \caption{Visualization of segmentation results.}
  \label{fig:CMaP_segmentation_visualization}
\end{figure}

\subsection{Visualizations}

\begin{figure}[t]
  \centering
  \subfloat[]{
      \includegraphics[width=0.48\linewidth]{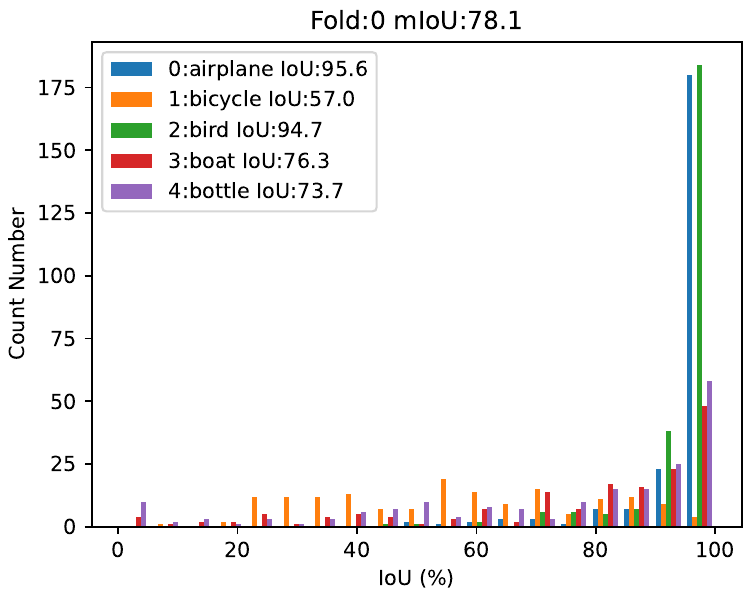}
      \label{fig:hist_fold0}
  }
  \hfill
  \subfloat[]{
      \includegraphics[width=0.48\linewidth]{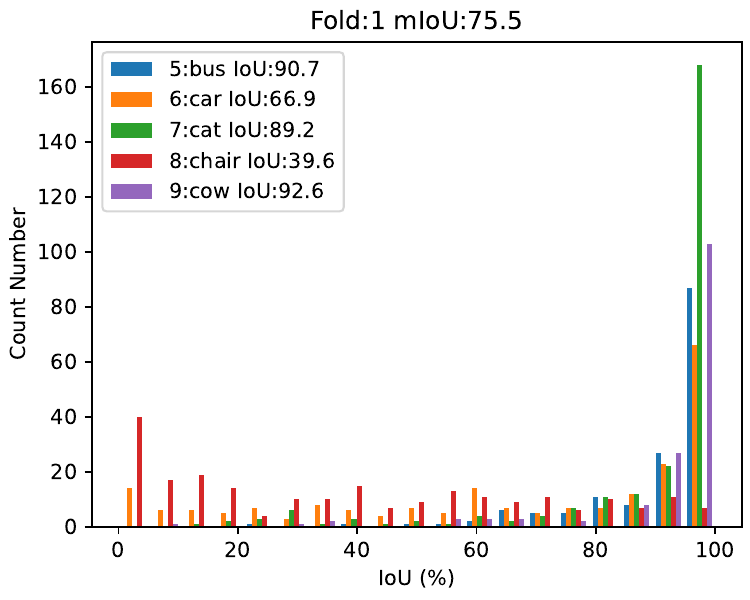}
      \label{fig:hist_fold1}
  }
  \\
  \subfloat[]{
      \includegraphics[width=0.48\linewidth]{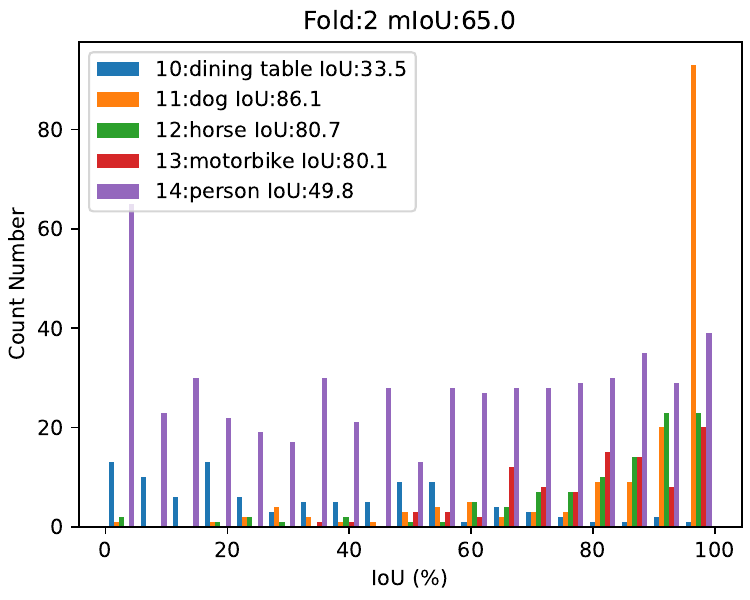}
      \label{fig:hist_fold2}
  }
  \hfill
  \subfloat[]{
      \includegraphics[width=0.48\linewidth]{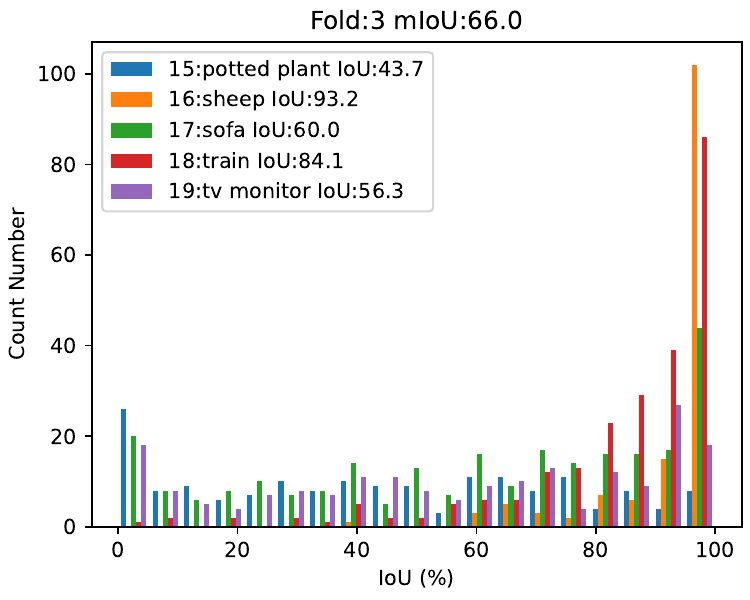}
      \label{fig:hist_fold3}
  }
  \caption{Class-wise performance distribution histograms of CMaP-SAM on PASCAL-$5^i$ dataset. The x-axis represents the mIoU values, while the y-axis indicates the number of classes. The red dashed line denotes the average mIoU value.}
  \label{fig:CMaP_classwise_performance}
\end{figure}

\textbf{Visualization of Segmentation Results:} 
Fig.~\ref{fig:CMaP_segmentation_visualization} illustrates segmentation results on the PASCAL-$5^i$ and COCO-$20^i$ datasets. The method demonstrated precise segmentation of object boundaries in simple cases (row 1), effective handling of sheep with significant scale variations and grayscale horse images despite limited color information (row 2), and robust performance in occlusion scenarios where occluded chair parts were accurately segmented while producing semantically superior results compared to the ground truth (e.g., correctly excluding erroneously labeled hand regions) (row 3). Notably, even in complex backgrounds, the method maintained accurate segmentation of chairs and cars (row 4). These visualizations validate the effectiveness of the contraction mapping theory-based position prior optimization framework across diverse and challenging scenarios. However, as shown in row 5, the method fails when the foreground and background objects in the query image exhibit similar appearances, leading to confusion. Furthermore, significant disparities between support and query images, such as those observed for plants with highly dissimilar appearances, adversely impact performance.

\textbf{Class-wise Performance Distribution:} Fig.~\ref{fig:CMaP_classwise_performance} presents performance histograms across semantic categories for all four folds of the PASCAL-$5^i$ dataset. The distributions exhibited notable right-skewness, with performance metrics concentrated in the medium-to-high value regions, forming dense clusters within high-performance intervals. This statistical pattern confirmed the method's ability to achieve stable and superior segmentation results across the majority of semantic categories.

\begin{figure}[htbp]
  \centering
  \includegraphics[width=1\textwidth]{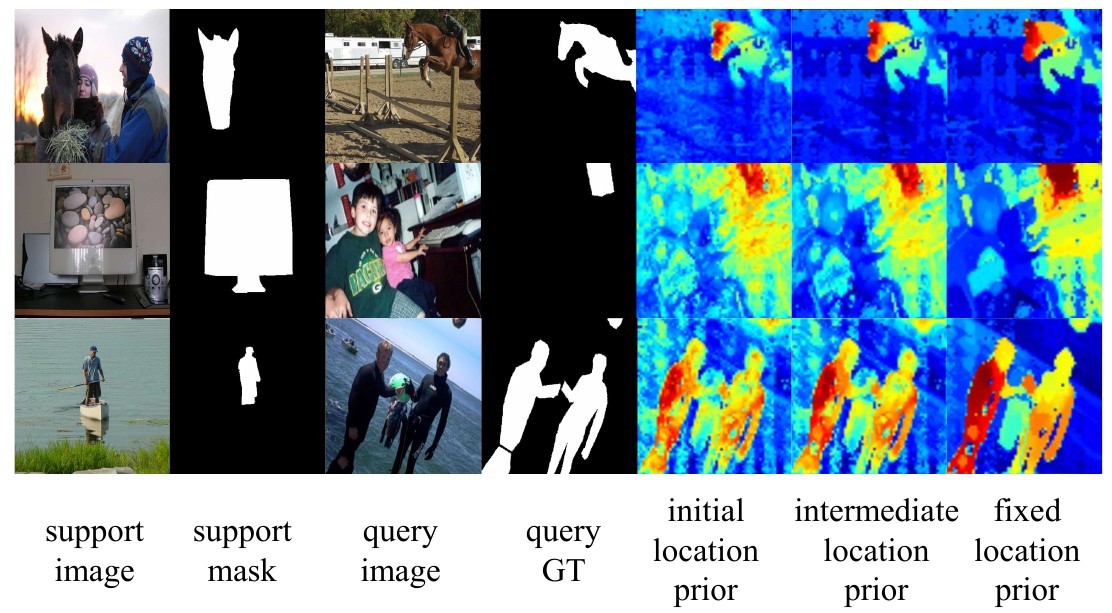}
  \caption{Visualization of the contraction mapping prior optimization process.}
  \label{fig:CMaP_fixpoint_visualization}
\end{figure}

\textbf{Visualization of contraction mapping prior optimization:} Fig.~\ref{fig:CMaP_fixpoint_visualization} illustrates the contraction mapping optimization process within the CMaP-SAM framework. The sequence includes support and query images with their corresponding masks, the initial position prior, intermediate states, and the final converged result. The initial prior exhibited coarse localization with substantial noise. As contraction mapping iterations progressed, a clear convergence pattern emerged, with attention regions becoming increasingly concentrated. The final prior achieved robust target localization, effectively suppressing background interference while preserving structural integrity.

\subsection{Limitations and Future Work}
Despite the promising performance of CMaP-SAM, several limitations remain. First, the model inherits the computational overhead from SAM, as it retains ViT-based image encoder, resulting in higher FLOPs compared to more lightweight CNN-based alternatives. Second, a performance saturation phenomenon is observed in our experiments: as the number of support samples increases, the improvement in performance gradually plateaus, which remains a critical challenge in FSS field. These limitations suggest directions for future work, including the exploration of lightweight SAM variants (e.g., EfficientSAM~\cite{xiong2024efficientsam}), pruning, and model distillation techniques to reduce inference latency while maintaining high accuracy. Additionally, developing meta-learned prompt generators could further improve efficiency and scalability for real-world applications.

\section{Conclusion}
This paper introduced CMaP-SAM, a framework that effectively bridges the gap between the Segment Anything Model and few-shot segmentation tasks through three components. First, position prior optimization was mathematically reformulated as a Banach contraction mapping, providing convergence guarantees while preserving both semantic information and structural correlations. Building on this foundation, an adaptive distribution alignment module was developed to connect continuous probability distributions with SAM's discrete prompt requirements, fundamentally addressing the information loss prevalent in prior approaches. These components work synergistically with a foreground-background decoupled refinement architecture to significantly enhance segmentation accuracy. Experimental validation confirmed the effectiveness of this approach, demonstrating 71.1 mIoU on PASCAL-$5^i$ and 56.1 mIoU on COCO-$20^i$ benchmarks. Beyond these empirical results, CMaP-SAM contributes valuable theoretical insights through its provable convergence properties, with potential applications in domains where annotation costs remain prohibitive yet precise segmentation is essential.

\section*{Acknowledgements}
This work was supported in part by National Science and Technology Major Project (2021ZD0112001), National Natural Science Foundation of China (Grant 62271119), Natural Science Foundation of Sichuan Province (2023NSFSC1972).

\end{document}